\newcommand{\bsX}{\boldsymbol{X}}
\newcommand{\bsm}{\boldsymbol{m}}
\newcommand{\bsy}{\boldsymbol{y}}
\newcommand{\bsbeta}{\boldsymbol{\beta}}
\newcommand{\bsx}{\boldsymbol{x}}
\newcommand{\bszero}{\boldsymbol{0}}
\newcommand{\bbeta}{\boldsymbol{\beta}}
\newcommand{\bfI}{\mathbf{I}}
\newcommand{\Prob}{\mathbb{P}}
\newcommand{\Exp}{\mathbb{E}}
\newcommand{\bbR}{\mathbb{R}}
\newcommand{\risk}{\mathcal{R}}
\newcommand{\class}[1]{\mathcal{#1}}
\newcommand{\eqdef}{\vcentcolon=}
\newcommand{\parent}[1]{\left( #1 \right)}
\newcommand{\enscond}[2]{\left\{ #1 \, : \, #2\right\}}
\newcommand{\scalar}[2]{\left\langle #1, #2\right\rangle}
\newcommand{\eqd}{\stackrel{\text{d}}{=}}
\DeclareMathOperator*{\supp}{supp}
\newcommand{\ie}{{\em i.e.,~}}
\newtheorem{theorem}{Theorem}[section]
\newtheorem{definition}[theorem]{Definition}
\newtheorem{example}[theorem]{Example}
\newtheorem{proposition}[theorem]{Proposition}
\newtheorem{lemma}[theorem]{Lemma}
\newtheorem{assumption}[theorem]{Assumption}
\newtheorem*{theorem*}{Theorem}
\newtheorem*{proposition*}{Proposition}
\newtheorem*{lemma*}{Lemma}
\definecolor{green}{rgb}{0.0, 0.5, 0.0}
\definecolor{red}{rgb}{0.8, 0.0, 0.0}
\definecolor{blue}{rgb}{0.01, 0.28, 1.0}
\definecolor{yellow}{rgb}{0.98, 0.93, 0.36}
\definecolor{orange}{rgb}{0.76, 0.23, 0.13}
    \newlength{\leftstackrelawd}
    \newlength{\leftstackrelbwd}
    \def\leftstackrel#1#2{\settowidth{\leftstackrelawd}%
    {${{}^{#1}}$}\settowidth{\leftstackrelbwd}{$#2$}%
    \addtolength{\leftstackrelawd}{-\leftstackrelbwd}%
    \leavevmode\ifthenelse{\lengthtest{\leftstackrelawd>0pt}}%
    {\kern-.5\leftstackrelawd}{}\mathrel{\mathop{#2}\limits^{#1}}}
\title{An example of prediction which complies with Demographic Parity and equalizes group-wise risks in the context of regression}
\author{%
  Evgenii Chzhen\\
  LMO, Universit\'e Paris-Saclay\\
  CNRS, INRIA\\
  \texttt{evgenii.chzhen@math.u-psud.fr}
  \And
  Nicolas Schreuder\\
  CREST, ENSAE\\
  Institut Polytechnique de Paris\\
  \texttt{nicolas.schreuder@ensae.fr}
  % examples of more authors
  % \And
  % Coauthor \\
  % Affiliation \\
  % Address \\
  % \texttt{email} \\
  % \AND
  % Coauthor \\
  % Affiliation \\
  % Address \\
  % \texttt{email} \\
  % \And
  % Coauthor \\
  % Affiliation \\
  % Address \\
  % \texttt{email} \\
  % \And
  % Coauthor \\
  % Affiliation \\
  % Address \\
  % \texttt{email} \\
}
\begin{document}

\maketitle

\begin{abstract}
Let $(\bsX, S, Y) \in \bbR^p \times \{1, 2\} \times \bbR$ be a triplet following some joint distribution $\Prob$ with feature vector $\bsX$, sensitive attribute $S$ , and target variable $Y$.
The Bayes optimal prediction $f^*$ which does not produce Disparate Treatment is defined as $f^*(\bsx) = \Exp[Y \mid \bsX = \bsx]$.
We provide a non-trivial example of a prediction $\bsx \to f(\bsx)$ which satisfies two common group-fairness notions: Demographic Parity and Equal Group-Wise Risks
\begin{align*}
    (f(\bsX) \mid S = 1) &\stackrel{d}{=} (f(\bsX) \mid S = 2)\enspace,\\
    \Exp[(f^*(\bsX) - f(\bsX))^2 \mid S = 1] &= \Exp[(f^*(\bsX) - f(\bsX))^2 \mid S = 2]\enspace.
\end{align*}
To the best of our knowledge this is the first explicit construction of a non-constant predictor satisfying the above.
We discuss several implications of this result on better understanding of mathematical notions of algorithmic fairness.
\end{abstract}

%- add references for definitions (clarify if they are our own) (EVG)

\section{Introduction}
\label{sec:intro}

Designing methods that satisfy group-fairness requirements has received a lot of theoretical and empirical attention in recent years \citep{barocas-hardt-narayanan,calmon2017optimized,chierichetti2017fair,Donini_Oneto_Ben-David_Taylor_Pontil18,dwork2018decoupled,hardt2016equality,dwork2012fairness,kilbertus2017avoiding,lum2016statistical,zafar2017fairness,zemel2013learning, agarwal2019fair,lipton2018does,chiappa2020general,gouic2020price,chzhen2020fair}.
Most of the contributions in this direction are concerned with the problem of binary classification, while the regression setup receiving much less attention to this date \citep{agarwal2019fair}.
However, even if the underlying problem at hand has a structure of binary classification, a continuous regression-type output might be more informative in real-world scenarios.

In the literature on algorithmic fairness, it is a standard practice to consider two distinct types of predictions: \emph{fairness through awareness}~\citep{dwork2012fairness} and \emph{fairness through unawareness (without Disparate Treatment)}~\citep{gajane2017formalizing,lipton2018does}.
The former type of prediction allows one to build separate model for each sensitive attribute, while the latter obliges one to fix a single model which is later applied across all groups. 
In the infinite sample regime, assuming that the joint distribution of the observations is known, recent works showed that the problem of regression with fairness through awareness under the \emph{Demographic Parity} constraint shares a strong connection with the problem of Wasserstein barycenters~\citep{gouic2020price,chzhen2020fair}. In particular,~\citet{gouic2020price} derives a closed form expression of fair optimal prediction in the sense of Demographic Parity.
However, very little is known about the predictions which avoid Disparate Treatment and achieve Demographic Parity even in the infinite sample regime. Actually, even the existence of non-trivial regression prediction strategies satisfying the two constraints is unclear.

In this work we make progress towards the mathematical understanding of the latter problem. We make the following contributions: we propose a large family of prediction functions which achieve Demographic Parity without producing Disparate Treatment; we identify a specific function within this class which additionally equalizes the group-wise risks.
Even though the proposed prediction rule achieves several desirable formal group-fairness notions, we argue that this prediction is not suitable for real-world scenarios. 
In contrast, a prediction that is allowed to produce Disparate Treatment can alleviate these drawbacks.
In the context of binary classification, similar conclusions were reached by~\cite{lipton2018does}.

\paragraph{Organization} The rest of this note is organised as follows. We present in Section~\ref{sec:setup_goals} our setup and general goal. In Section~\ref{sec:main_result} we provide a description of a family of prediction rules satisfying the fairness constraints of interest. Finally in Section~\ref{sec:discussion} we discuss a critical flaw of those prediction rules from individual level fairness viewpoint and provide some open questions.
Proofs can be found in Section~\ref{sec:proofs}.

\paragraph{Notation} For a distribution $\mu$ defined on a
measurable space $(X,\mathscr X)$ and a measurable 
map $T:X\mapsto Y$, where $Y$ is another space endowed with
a $\sigma$-algebra $\mathscr Y$, we denote by $T\sharp \mu$
the push-forward measure defined by $(T\sharp \mu)(A) =
\mu\big(T^{-1}(A)\big)$ for all $A\in \mathscr Y$. For two random variables $U, V$ we write $U \eqd V$ to denote their equality in distribution. The standard Euclidean inner product and Euclidean norm in $\bbR^p$ are denoted by $\scalar{\cdot}{\cdot}$ and $\|\cdot\|_2$ respectively.

\section{Setup and general goal}
\label{sec:setup_goals}
Let $(\bsX, S, Y) \in \bbR^p \times \{1, 2\} \times \bbR$ be a triplet following some joint distribution $\Prob$ where $\bsX$ is a feature vector, $S$ a binary sensitive attribute (\textit{e.g.}, gender or race) and $Y$ is a target variable. For $s \in \{1,2\}$, let $\mu_{\bsX | s}$ denote the distribution of the features inside the group $S = s$.
We are interested in finding a mapping between the feature vector and the target variable which is fair in a sense we specify in this section.

The first notion of fairness that we consider restricts the class of predictors to those which do not take as input the sensitive attribute $S$.
\begin{definition}[Disparate Treatment]
    \label{def:DI}
   Any measurable function $f : \bbR^p \to \bbR$ that cannot receive the sensitive attribute $S$ in its functional form does not produce \emph{Disparate Treatment}.
\end{definition}
We note that \citet{gajane2017formalizing} refer to the latter as fairness through unawareness.
This property might be desirable for obvious legal and/or privacy reasons \citep{primus2003equal,barocas2016big,gajane2017formalizing}. However it does not guarantee the prediction to be \emph{statistically independent} from the sensitive attribute $S$ because of correlations between the sensitive attribute $S$ and the feature vector $\bsX$. Indeed, consider the Bayes optimal predictor $\bsx \mapsto f^*(\bsx)$ defined as
\begin{align*}
    f^*(\bsx) = \Exp[Y \mid \bsX = \bsx]\enspace.
\end{align*}
It does not take as input the sensitive attribute and achieves the lowest possible squared risk among predictions avoiding Disparate Treatment.
Yet, the predictor $f^*$ might still promote disparity between sensitive groups if the distributions of features $\bsX$ differ between groups.

To address the above shortcoming, we further restrict the space of possible predictions to those satisfying Demographic Parity (DP)~\citep{calders2009building,calders2013controlling}. 

\begin{definition}[Demographic Parity]
   A predictor $f : \bbR^p \to \bbR$ achieves \emph{Demographic Parity} if
   \begin{align*}
       (f(\bsX) \mid S = 1) \eqd (f(\bsX) \mid S = 2)\enspace.
   \end{align*}
\end{definition}
% \section{Construction}
Such predictors are also said to avoid \emph{Disparate Impact}. This notion of fairness is quite intuitive since it asks the group-wise distributions of the predictions to be the same across all groups. However this probabilistic constraint is not particularly nice to handle and describing explicitly all the functions satisfying this constraint is not an easy  task. Obviously, any constant function satisfies this constraint; but what about functions depending on the feature vector $\bsX$ ? It is not obvious that one can design a non-trivial function $f$ which does not depend on the sensitive attribute $S$ while achieving Demographic Parity. We give two simple scenarios for which we can explicit the class of functions satisfying Demographic Parity.
\begin{example}[Simple case 1]
    \label{ex:simple1}
    Assume that distributions of the features $\bsX$ is the same within each group, \ie
    \begin{align*}
        (\bsX \mid S = 1) \eqd (\bsX \mid S = 2)\enspace.
    \end{align*}
    In this case \emph{any} function $f : \bbR^p \to \bbR$ achieves Demographic Parity. In particular, one can use the Bayes optimal prediction $f^*$.
\end{example}

\begin{example}[Simple case 2]
    Assume that the sensitive attribute $S$ is a deterministic function of the features $\bsX$
    and that the supports of $\mu_{\bsX | 1}$ and $\mu_{\bsX | 2}$ are non-intersecting. Then we can construct two different functions $f_1, f_2 : \bbR^p \to \bbR$ such that
    \begin{align*}
        (f_1(\bsX) \mid S = 1) \eqd (f_2(\bsX) \mid S = 2)\enspace.
    \end{align*}
    Then we define $f(\bsx) \eqdef f_1(\bsx)$ for all $\bsx$ in the support of $\mu_{\bsX | 1}$ and $f(\bsx) \eqdef f_2(\bsx)$ for all $\bsx$ in the support of $\mu_{\bsX | 2}$.
    In this way Demographic Parity is achieved by \emph{one} function $f$, which avoids Disparate Impact.
    In other words, when the sensitive attribute $S$ is a deterministic function of features $\bsX$, using $S$ or not using $S$ in the functional form of the prediction does not change anything.
\end{example}

Those two toy examples enable us to get a better understanding of Demographic Parity; however they are far from sufficient since assuming that the features are distributed the same across groups or that the sensitive attribute is a deterministic function of $\bsX$ is clearly unrealistic in practice. Thus, we would like to be able to cover more scenarios than those listed above.
More formally, the main question that we would like to address is:
\begin{align*}
    &\textbf{Main question: } &&\text{Is there a non trivial prediction strategy $f$ which}\\
    & &&\text{1) avoids Disparate Treatment;}\\
    & &&\text{2) achieves Demographic Parity;}\\
    & &&\text{under minimal assumptions on the distribution of $(\bsX, S, Y)$ ?}
\end{align*}
Let us emphasize that the main mathematical challenge of this question comes from the fact that the sensitive attribute cannot be used in the functional form of the prediction while the prediction must satisfy a constraint depending on the sensitive attribute.
We elaborate more on this issue in the next example.
\begin{example}[Gaussian features]
\label{eq:gauss}
Assume that the feature vector $\bsX \mid S$ is distributed as
\begin{align*}
    (\bsX \mid S = 1) \sim \class{N}(\bsm_1, \bfI),\qquad (\bsX \mid S = 2) \sim \class{N}(\bsm_2, 2\bfI)\enspace,
\end{align*}
with $\bsm_1 \neq \bsm_2$. It is very easy to find a function $g : \bbR^p \times \{1, 2\} \to \bbR$ so that
\begin{align*}
    (g(\bsX, S) \mid S = 1) \eqd (g(\bsX, S) \mid S = 2)\enspace.
\end{align*}
In particular, one can consider group-wise affine predictions: $g(\bsx, 1) = \scalar{\bsbeta_1}{\bsx} + b_1$ and $g(\bsx, 2) = \scalar{\bsbeta_2}{\bsx} + b_2$ with $\bsbeta_1, \bsbeta_2 \in \bbR^p$ satisfying
\begin{align*}
    \scalar{\bsbeta_1}{\bsm_1} + b_1 = \scalar{\bsbeta_2}{\bsm_2} + b_2,\qquad \|\bsbeta_1\|_2 = \sqrt{2}\|\bsbeta_2\|_2\enspace.
\end{align*}
Moreover, the risk-optimal choice of $g$ is also group-wise affine (we elaborate on it later in the text). 
However, there is no non-trivial (\ie nonconstant) affine function $f: \bbR^p \to \bbR$ which achieves Demographic Parity and avoids Disparate Treatment.

Indeed, assume that there exist $\bbeta \in \bbR^p, b \in \bbR$ such that $f(\bsx) = \scalar{\bsbeta}{\bsx} + b$ achieves Demographic Parity. 
Since the features are group-wise Gaussians, then
\begin{align*}
    (f(\bsX) \mid S = 1) \sim \class{N}(\scalar{\bbeta}{\bsm_1} + b, \|\bbeta\|^2_2),\qquad (f(\bsX) \mid S = 2) \sim \class{N}(\scalar{\bbeta}{\bsm_2} + b, 2\|\bbeta\|^2_2)\enspace.
\end{align*}
For the above two distributions to be equal we must set $\bbeta = \bszero$ and the prediction $f \equiv b$ reduces to a trivial constant.
\end{example}
Example~\ref{eq:gauss}  highlights the intrinsic difficulty of the considered question --
even if the distribution of the covariates is group-wise Gaussian and even if the Bayes optimal prediction $f^*(\bsx) = \Exp[Y \mid \bsX = \bsx]$ is affine, there is \emph{no} non-trivial affine prediction rule $f(\bsx) = \scalar{\bsbeta}{\bsx} + b$ achieving Demographic Parity \emph{and} avoiding Disparate Treatment.
Besides, this example demonstrates that learning prediction function without Disparate Impact and Disparate Treatment in an {agnostic learning manner} might be a bad idea.
Indeed, assume the same model as in Example~\ref{eq:gauss} and define $f_{\text{affine}}^{\text{DP}}$ as a solution of
\begin{align*}
    \min_{f : \bbR^p \to \bbR}\enscond{\Exp(Y - f(\bsX))^2}{(f(\bsX) \mid S = 1) \eqd (f(\bsX) \mid S = 2),\quad f \in \class{F}_{\text{affine}}}\enspace,
\end{align*}
%$\class{F}_{\text{affine}} = \enscond{f}{\exists \bsbeta \in \bbR^p, b \in \bbR\,\text{ s.t. }\,f(\bsx) = \scalar{\bsbeta}{\bsx} + b}$
where $\class{F}_{\text{affine}} = \{f:\bsx \mapsto  \scalar{\bsbeta}{\bsx} + b;\bsbeta \in \bbR^p, b \in \bbR\}$ -- a recurrent prediction class restriction in the learning literature~\citep{Vapnik_Chervonenkis68}. Following Example~\ref{eq:gauss} we know that $f_{\text{affine}}^{\text{DP}}$ is a trivial constant prediction thus building a data-driven method which performs as well as $f_{\text{affine}}^{\text{DP}}$ is not that relevant. Due to these observations we believe that current fairness definitions should be first examined without the restriction of the predictors.

% Consider a simple example: 
In this work we provide a large family of prediction functions $\class{F}_{\nnearrow}$, which are parametrized by non-decreasing continuous functions $Q : [0, 1] \to \bbR$. Every function $f_Q \in \class{F}_{\nnearrow}$ avoid Disparate Treatment and achieves Demographic Parity.
Furthermore, we show that the family $\class{F}_{\nnearrow}$ contains a special prediction function  $f_{Q^*}$, which achieves an additional fairness criterion. Namely, it achieves \emph{Equality of Group-Wise Risks} defined below.
\begin{definition}
   A predictor $f : \bbR^p \to \bbR$ achieves the \emph{Equality of Group-Wise Risks} (EGWR) constraint if
   \begin{align*}
       \mathbb{E}[(f^*(\bsX) -f(\bsX))^2 \mid S=1] = \mathbb{E}[(f^*(\bsX) -f(\bsX))^2 \mid S=2]\enspace.
   \end{align*}
\end{definition}
Similar notion of fairness in its relaxed formulation was considered in the context of regression by~\cite{agarwal2019fair}.
% Note that $f^*$ is the prediction function that achieves the smallest value of the risk under the Disparate Treatment constraint.  
% \begin{align*}
%     &\textbf{Question: } &&\text{is there a non trivial prediction strategy $f$ which}\\
%     & &&\text{1) avoids Disparate Treatment;}\\
%     & &&\text{2) avoids Disparate Impact;}\\
%     & &&\text{3) achieves Equality of Group-Wise Risks?}
% \end{align*}

% , moreover, there exists an explicit choice of $f^{\star} \in \class{F}_{\nnearrow}$ such that $f^{\star}$ reaches Equality of Group-Wise Risks.
Despite three fruitful properties of formal group-fairness requirements achieved by $f_{Q^*}$, we argue that this function fails to satisfy basic principles of fairness and justice.
The main reason for its failure is the avoidance of Disparate Treatment, which forces a prediction to ``guess'' the sensitive attribute of a given feature vector $\bsx \in \bbR^p$.
Such guessing leads to undesirable predictions for individuals $\bsx \in \bbR^p$ with sensitive attribute $S = 1$ but who are more likely to have $S = 2$ and vice-versa.

\paragraph{Fairness through awareness: a reminder} Before going further into the problem, let us provide a short theoretical reminder for the situation when the sensitive attribute $S$ is allowed to be used in the functional form of the prediction, that is, the Disparate Treatment is allowed. More formally, we are interested in finding a prediction $g^* : \bbR^p \times \{1, 2\} \to \bbR$ which is a solution of
\begin{align*}
    \min_{g : \bbR^p \times \{1, 2\} \to \bbR}\enscond{\Exp(Y - g(\bsX, S))^2}{(g(\bsX, S) \mid S = 1) \eqd (g(\bsX, S) \mid S = 2)}\enspace.
\end{align*}
\citet{gouic2020price,chzhen2020fair} showed that under mild additional assumptions\footnote{They assume that for all $s \in \{1, 2\}$ the measure $g(\cdot, s) \sharp \mu_{\bsX | s}$ is continuous and has finite second moment.} on the distribution $\Prob$, the optimal fair prediction $g^*$ can be obtained for all $(\bsx, s) \in \bbR^p \times \{1, 2\}$ as
\begin{align}
    \label{eq:aware}
    g^*(\bsx, s) = \left(p_1 G_{1}^{-1} + p_2G_{2}^{-1}\right) \circ G_s \big(\Exp[Y \mid \bsX = \bsx, S = s]\big)\enspace,
\end{align}
where $p_s = \Prob(S = s)$, $G_s(t) = \Prob\parent{\Exp[Y \mid \bsX, S] \leq t \mid S = s}$, and $G_s^{-1}$ is the generalized inverse of $G_s$ for all $s \in \{1, 2\}$.
In particular, returning to Example~\ref{eq:gauss}, one can show that if $\Exp[Y \mid \bsX, S] = \scalar{\bsbeta^*_S}{\bsX} + b_S$, then the fair optimal prediction $g^*$ is also group-wise affine.
Indeed, we note that under the assumptions of Example~\ref{eq:gauss} it holds that
\begin{align*}
    (\Exp[Y \mid \bsX, S] \mid S = 1) \sim \class{N}(b_1, \|\bsbeta^*_1\|_2^2),\qquad 
    (\Exp[Y \mid \bsX, S] \mid S = 2) \sim \class{N}(b_2, 2\|\bsbeta^*_2\|_2^2)\enspace.
\end{align*}
Denoting by $\Phi$ the cumulative distribution function of the standard Gaussian we can write that $G_1(t) = \Phi(\sfrac{(t - b_1)}{\|\bsbeta^*_1\|_2})$ and $G_2(t) = \Phi(\sfrac{(t - b_2)}{\sqrt{2}\|\bsbeta^*_1\|_2})$. Their inverses can be respectively written as
\begin{align*}
    G^{-1}_1(t) = b_1 + \|\bsbeta^*_1\|_2 \Phi^{-1}(t), \qquad G^{-1}_2(t) = b_2 + \sqrt{2}\|\bsbeta^*_2\|_2 \Phi^{-1}(t)\enspace.
\end{align*}
Substituting these expressions into Eq.~\eqref{eq:aware} and simplifying we get that
\begin{align*}
    g^*(\bsx, 1) &= 
    \scalar{\bsbeta_1^*}{\bsx}\parent{p_1 + p_2\frac{\sqrt{2}\|\bsbeta^*_2\|_2}{\|\bsbeta^*_1\|_2}} + p_1b_1 + p_2b_2\enspace,\\
    g^*(\bsx, 2) &= 
    \scalar{\bsbeta_2^*}{\bsx}\parent{p_2 + p_1\frac{\|\bsbeta^*_1\|_2}{\sqrt{2}\|\bsbeta^*_2\|_2}} + p_1b_1 + p_2b_2\enspace.
    % p_1\parent{\scalar{\bsbeta^*_1}{\bsx} + b_1} + p_2\parent{\sqrt{2}\scalar{\bsbeta^*_1}{\bsx}\frac{\|\bsbeta^*_2\|_2}{\|\bsbeta^*_1\|_2} + b_2}\enspace.
\end{align*}
The above highlights that in the case of linear regression model, the predictor $g^* : \bbR^p \times \{1, 2\} \to \bbR$ which minimizes the risk under the Demographic Parity constraint remains affine. We again emphasize that the situation is changed drastically if a prediction is not allowed to produce Disparate Treatment.

\begin{figure}[!t]
\centering
\includegraphics[width=0.45\textwidth]{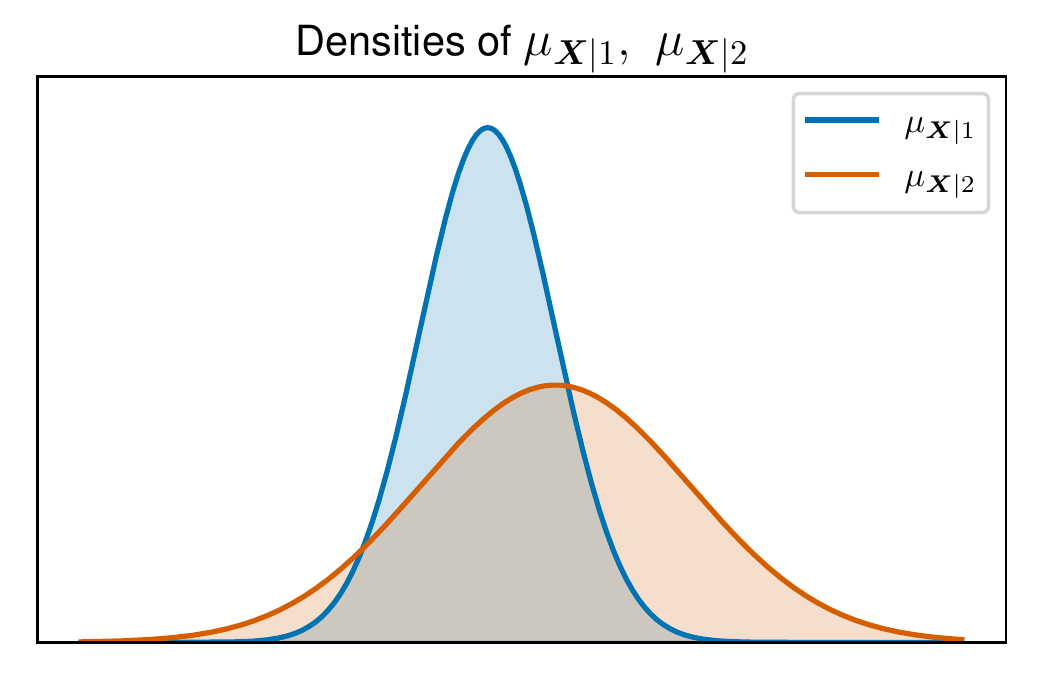}
\includegraphics[width=0.45\textwidth]{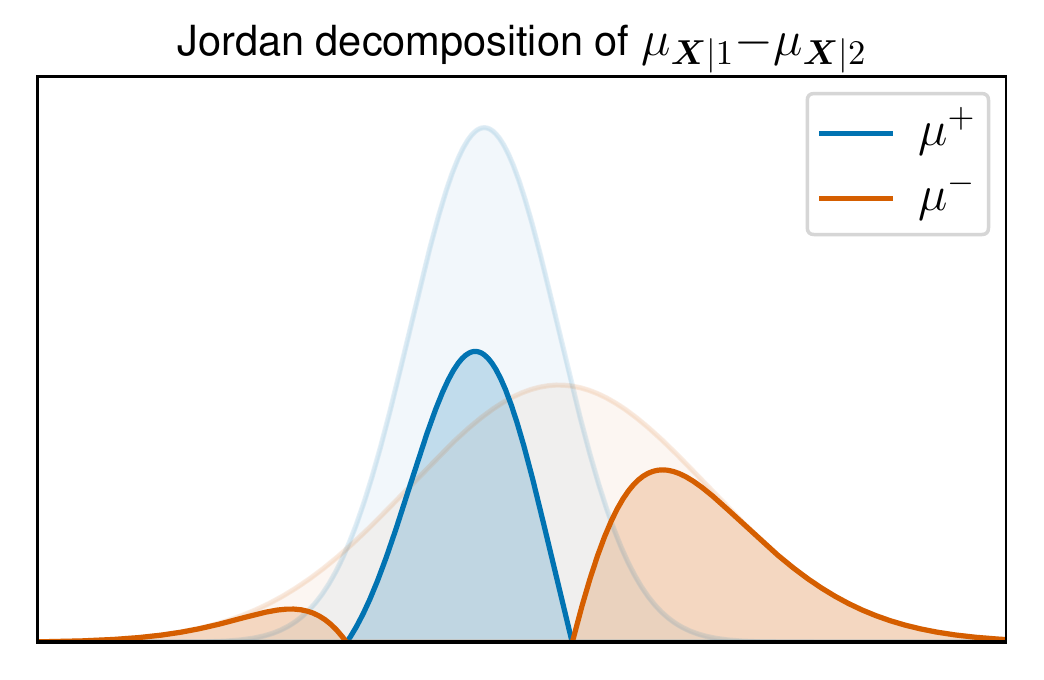}
\caption{Jordan decomposition of a signed measure. (\texttt{Left}) An example of feature distributions within two groups. (\texttt{Right}) Jordan decomposition of the difference $\mu_{\bsX | 1} - \mu_{\bsX | 2}$.}
\label{fig:hahn}
\end{figure}
\section{Description of the family}
\label{sec:main_result}
In this section we present a family of prediction rules, indexed by the set of continuous non-decreasing functions $Q : [0, 1] \to \mathbb{R}$, which achieve Demographic Parity and explicit a function from this family which also satisfies the Equality of Group-Wise Risks constraint.

\paragraph{Jordan decomposition} Consider the signed measure $\mu \coloneqq \mu_{\bsX | 1} - \mu_{\bsX | 2}$, and let $\mu^+, \mu^-$ be its Jordan decomposition, that is $\mu = \mu^+ - \mu^-$ and $\supp(\mu^+) \cap \supp(\mu^-) = \emptyset$.
Unless the supports of $\mu_{\bsX | 1}$ and $\mu_{\bsX | 2}$ are disjoint, the measures $\mu^+$ and $\mu^-$ do not integrate to one, \ie they are not probability measures. However, both $\mu^+$ and $\mu^-$ have the same total mass.
We define $P\mu^{\pm} = \sfrac{\mu^{\pm}}{\mu^{\pm}(\bbR)}$ the projection of $\mu^{\pm}$ on the space of probability measures.
We also define
\begin{align*}
    F_{\pm}(t) \eqdef P\mu^{\pm}\parent{\enscond{\bsx \in \bbR^p}{f^*(\bsx) \leq t}}\enspace,
\end{align*}
the cumulative distribution function of $f^* \sharp P\mu^{\pm}$.

The supports of measures $\mu^+$ and $\mu^-$ have a simple and intuitive interpretation. Note that if $\bsx \in \supp(\mu^+)$, then $\bsx$ is more likely to be a member of the group $S = 1$ and vice versa.
Meanwhile, if $\bsx \in \bbR^p \setminus (\supp(\mu^+) \cup \supp(\mu^-))$ then $\bsx$ can be equally likely coming from $S = 1$ or from $S = 2$.
See Figure~\ref{fig:hahn} for an illustration with univariate covariates.

The rational behind the introduction of the Jordan decomposition of $\mu$ into $\mu^+$ and $\mu^-$ comes from the following simple insight. It says that in order to check Demographic Parity for predictions without Disparate Treatment one only needs to know $\mu^+$ and $\mu^-$ instead of the whole distribution of the covariates $\mu_{\bsX|s}$. This idea is formalized in the next lemma.
\begin{lemma}
\label{lem:signed}
A prediction without Disparate Treatment $f : \bbR^p \to \bbR$ achieves Demographic Parity iff
\begin{align}
    \label{eq:signed_eq}
    f \sharp \mu^+ = f \sharp \mu^-\enspace.
\end{align}
\end{lemma}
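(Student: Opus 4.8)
The plan is to reduce Demographic Parity to a statement about the signed measure $\mu = \mu_{\bsX \mid 1} - \mu_{\bsX \mid 2}$ and then exploit the linearity of the push-forward operation. First I would rephrase the constraint in purely measure-theoretic terms: since $f$ does not use $S$, the law of $f(\bsX)$ conditionally on $S = s$ is exactly the push-forward $f \sharp \mu_{\bsX \mid s}$, so Demographic Parity $(f(\bsX) \mid S = 1) \eqd (f(\bsX) \mid S = 2)$ is equivalent to the equality of probability measures $f \sharp \mu_{\bsX \mid 1} = f \sharp \mu_{\bsX \mid 2}$ on $\bbR$.

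Second, I would extend the push-forward from probability measures to finite signed measures through the preimage formula $(f \sharp \nu)(A) = \nu(f^{-1}(A))$ for every Borel set $A \subseteq \bbR$, which makes $\nu \mapsto f \sharp \nu$ a \emph{linear} map (in particular $f \sharp (\nu_1 - \nu_2) = f \sharp \nu_1 - f \sharp \nu_2$). With this in hand the proof is a short chain of equivalences:
\begin{align*}
    f \sharp \mu_{\bsX \mid 1} = f \sharp \mu_{\bsX \mid 2}
    &\iff f \sharp \mu_{\bsX \mid 1} - f \sharp \mu_{\bsX \mid 2} = 0\\
    &\iff f \sharp \mu = 0 \iff f \sharp \mu^+ = f \sharp \mu^-\enspace,
\end{align*}
where the middle equivalence uses linearity together with the definition $\mu = \mu_{\bsX \mid 1} - \mu_{\bsX \mid 2}$, and the last one applies the Jordan decomposition $\mu = \mu^+ - \mu^-$ once more via linearity.

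As for the main obstacle: there is no deep difficulty here, and the only point requiring care is the rigorous handling of the signed push-forward. In particular I would emphasize that one must \emph{not} assume that $f \sharp \mu^+ - f \sharp \mu^-$ is the Jordan decomposition of $f \sharp \mu$. The supports of $\mu^+$ and $\mu^-$ are disjoint by construction, but after applying $f$ the two images $f \sharp \mu^+$ and $f \sharp \mu^-$ may overlap, so $f \sharp \mu^+ - f \sharp \mu^-$ need not be a mutually singular decomposition. This is irrelevant to the argument: the equivalence $f \sharp \mu = 0 \iff f \sharp \mu^+ = f \sharp \mu^-$ relies only on linearity and on the fact that $f \sharp \mu^+$ and $f \sharp \mu^-$ are genuine positive finite measures (of equal total mass, since $\mu^+(\bbR) = \mu^-(\bbR)$), so their difference vanishes precisely when the two measures coincide.
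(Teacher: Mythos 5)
Your proof is correct and is essentially the paper's argument in more abstract clothing: the paper splits the events $\{f \le t\}$ over the Hahn sets $A_+ = \supp(\mu^+)$, $A_- = \supp(\mu^-)$ and $A_0 = \bbR^p \setminus (A_+ \cup A_-)$ and cancels the common contribution on $A_0$, which is precisely your identity $f \sharp \mu_{\bsX | 1} - f \sharp \mu_{\bsX | 2} = f \sharp \mu^+ - f \sharp \mu^-$ obtained from linearity of the signed push-forward. Your explicit caveat that $f \sharp \mu^+ - f \sharp \mu^-$ need not be the Jordan decomposition of $f \sharp \mu$ (since the images may overlap) is a worthwhile point of care that the paper leaves implicit, but it does not change the substance of the argument.
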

\begin{proof}
Set $A_{\square} = \supp(\mu^\square)$ for $\square \in \{\pm\}$ and $A_0 = \bbR^p \setminus (A_+ \cup A_-)$.

($\Rightarrow$) If $f$ achieves Demographic Parity, then $f \sharp \mu_{\bsX | 1} = f \sharp \mu_{\bsX | 2}$. Note that, for all $t \in \bbR$ it holds that
\begin{align*}
    \mu_{\bsX | \square}\enscond{\bsx \in \bbR^p}{f(\bsx) \leq t} =
    \mu_{\bsX | \square}\enscond{\bsx \in A_+}{f(\bsx) \leq t}
    &+ \mu_{\bsX | \square}\enscond{\bsx \in A_-}{f(\bsx) \leq t}\\
    &+ 
    \mu_{\bsX | \square}\enscond{\bsx \in A_0}{f(\bsx) \leq t}\enspace.
\end{align*}
Note that by the definition of $\mu^+$ and $\mu^-$ it holds that
\begin{align*}
    \mu_{\bsX | 1}\enscond{\bsx \in A_0}{f(\bsx) \leq t} = \mu_{\bsX | 2}\enscond{\bsx \in A_0}{f(\bsx) \leq t}\enspace,
\end{align*}
and thus, the condition $f \sharp \mu_{\bsX | 1} = f \sharp \mu_{\bsX | 2}$ implies that for all $t \in \bbR$
\begin{align}
     \mu_{\bsX | 1}\enscond{\bsx \in A_+}{f(\bsx) \leq t} - &\mu_{\bsX | 2}\enscond{\bsx \in A_+}{f(\bsx) \leq t} =\nonumber\\ &\mu_{\bsX | 2}\enscond{\bsx \in A_-}{f(\bsx) \leq t} - \mu_{\bsX | 1}\enscond{\bsx \in A_-}{f(\bsx) \leq t}\enspace.\label{eq:signed2}
\end{align}
The latter is equivalent to Eq.~\eqref{eq:signed_eq}.

($\Leftarrow$) Recall that Eq.~\eqref{eq:signed_eq} is equivalent to Eq.~\eqref{eq:signed2} and that for any $f : \bbR^p \to \bbR$ it holds that
\begin{align*}
    \mu_{\bsX | 1}\enscond{\bsx \in A_0}{f(\bsx) \leq t} = \mu_{\bsX | 2}\enscond{\bsx \in A_0}{f(\bsx) \leq t}\enspace.
\end{align*}
Combining both concludes the proof.
\end{proof}

% Notice that if $f$ achieves Demographic Parity, then $f \sharp \mu_{\bsX | 1} = f \sharp \mu_{\bsX | 2}$. The latter can be re-written as $f \sharp (\mu_{\bsX | 1} - \mu_{\bsX | 2}) = f \sharp (\mu^+ - \mu^-) = 0$, thus if $f$ achieves Demographic Parity, then $f \sharp \mu^+ = f\sharp \mu^-$. This argument can be made in the converse direction, that is, if $f$ is such that $f \sharp \mu^+ = f\sharp \mu^-$, then $f$ achieves Demographic Parity.
Note that such an argument would not work if $f$ was allowed to depend on the sensitive attribute.

\paragraph{Prediction rules} Let $Q: [0, 1] \to \bbR$ be any continuous non-decreasing function. Define the following prediction rule
\begin{align*}
\label{eq:predictor}
\tag{\textasteriskcentered}
    f_{Q}(\bsx)
    =
    \begin{cases}
        Q \circ F_+ \circ f^*(\bsx) &\text{if } \bsx \in \supp(\mu^+) \\
        Q\circ F_- \circ f^*(\bsx) &\text{if } \bsx \in \supp(\mu^-)\\
       f^*(\bsx) &\text{if } \bsx \in  \bbR^p \setminus \parent{\supp(\mu^+) \cup \supp(\mu^-)}
    \end{cases}\enspace.
\end{align*}
One should think of $Q$ as a quantile function of some continuous univariate probability measure $\lambda$. In the language of optimal transport the function $Q \circ F_\square \circ f^*(\cdot)$ is the optimal transport map from $f^* \sharp P\mu^\square$ to $\lambda$. An exact theoretical motivation to introduce function $Q$ will be clarified later in the text. 

There are three cases in the above prediction rule:
\begin{enumerate}
    \item $\bsx \in \supp(\mu^+)$, in this case $\bsx$ is more likely associated with $S = 1$.
    \item $\bsx \in \supp(\mu^-)$, in this case $\bsx$ is more likely associated with $S = 2$.
        \item $\bsx \in  \bbR^p \setminus \parent{\supp(\mu^+) \cup \supp(\mu^-)}$, in this case $\bsx$ can be equally likely associated with group $S = 1$ and $S = 2$ and the decision is made in accordance with the Bayes optimal prediction by the analogy with Example~\ref{ex:simple1}.
\end{enumerate}
\begin{figure}[!t]
\centering
\includegraphics[width=0.48\textwidth]{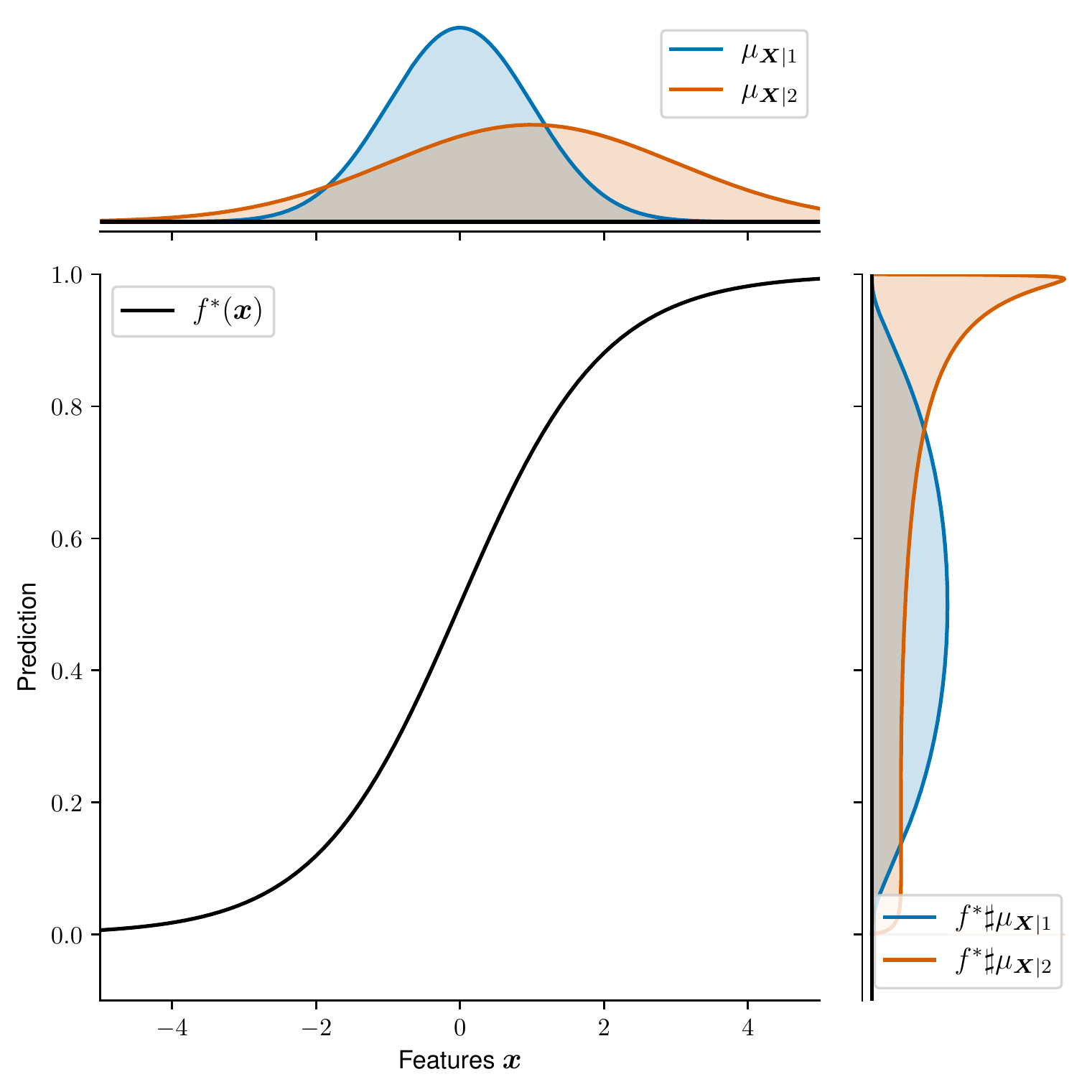}
\includegraphics[width=0.48\textwidth]{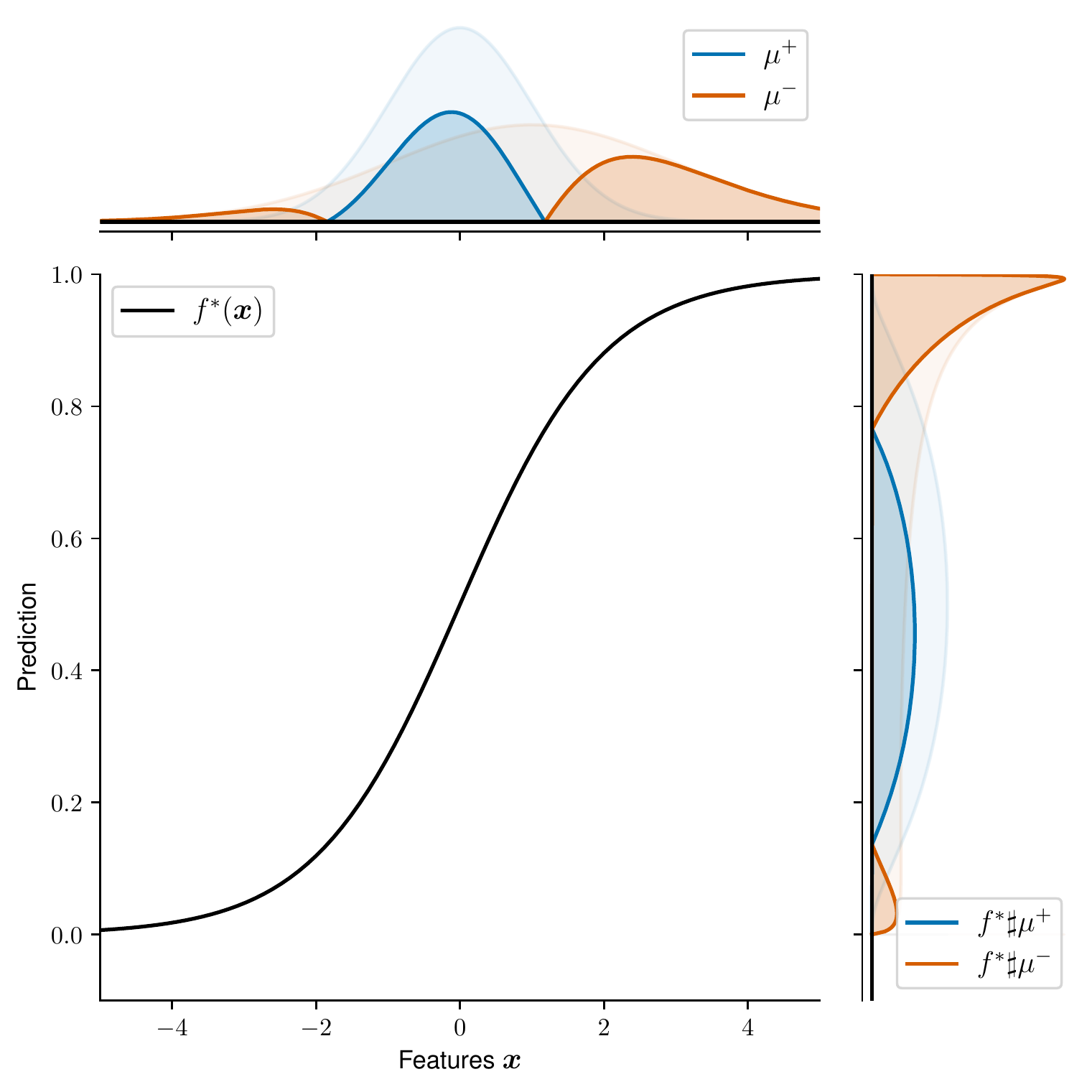}
\caption{(\texttt{Left}) The Bayes optimal prediction $f^*$ is illustrated in the center. The group-wise distributions of features is illustrated on the top. The group-wise distribution of the Bayes optimal prediction is illustrated on the right. (\texttt{Right}) Jordan decomposition of $\mu_{\bsX | 1} - \mu_{\bsX | 2}$ is illustrated on the top and their push-forward (through $f^*$) measures are on the right.}
\label{fig:moving}
\end{figure}

\paragraph{Fairness of prediction rules} Note that since the prediction rules are not allowed to depend on the sensitive attribute in its functional form, they do not produce Disparate Treatment. In order to show that the prediction rules defined in \eqref{eq:predictor} satisfy other fairness constraints, we make one standard technical assumption about particular distributions induced by the Bayes rule.
\begin{assumption}
    \label{ass:cont_signed}
    The measures $f^* \sharp \mu^+$,  $f^* \sharp \mu^-$ are non-atomic with finite second moments.
\end{assumption}

The following proposition states that, under the previous assumption, the defined prediction rules achieve Demographic Parity. 

\begin{proposition}
    \label{prop:DP_good}
   Let Assumption~\ref{ass:cont_signed} hold. Let $Q: [0, 1] \to \bbR$ be any continuous non-decreasing function, then the prediction rule $f_Q$ is fair in the sense of Demographic Parity.
\end{proposition}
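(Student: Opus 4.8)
The plan is to verify the criterion furnished by Lemma~\ref{lem:signed}: since $f_Q$ does not depend on $S$ in its functional form, it achieves Demographic Parity if and only if $f_Q \sharp \mu^+ = f_Q \sharp \mu^-$. I would therefore compute each of these two pushforwards explicitly and show they coincide.

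First I would use that $\mu^{\pm}$ is concentrated on $\supp(\mu^{\pm})$, so only the values of $f_Q$ on $\supp(\mu^{\pm})$ matter for $f_Q \sharp \mu^{\pm}$. By the definition~\eqref{eq:predictor} of the prediction rule, on $\supp(\mu^+)$ we have $f_Q = Q \circ F_+ \circ f^*$ and on $\supp(\mu^-)$ we have $f_Q = Q \circ F_- \circ f^*$. Writing $m \eqdef \mu^+(\bbR^p) = \mu^-(\bbR^p)$ for the common total mass of the two parts of the Jordan decomposition and $P\mu^{\pm} = \mu^{\pm}/m$ for the associated normalised probability measures, functoriality of the pushforward then gives
\begin{align*}
    f_Q \sharp \mu^{\pm} = m \cdot \big( (Q \circ F_{\pm}) \circ f^* \big) \sharp P\mu^{\pm} = m \cdot (Q \circ F_{\pm}) \sharp \big( f^* \sharp P\mu^{\pm} \big)\enspace.
\end{align*}

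The key step is the probability integral transform. By construction $F_{\pm}$ is the cumulative distribution function of $f^* \sharp P\mu^{\pm}$, and Assumption~\ref{ass:cont_signed} guarantees that $f^* \sharp \mu^{\pm}$, hence $f^* \sharp P\mu^{\pm}$, is non-atomic, so $F_{\pm}$ is continuous. Therefore, if $W \sim f^* \sharp P\mu^{\pm}$ then $F_{\pm}(W)$ is uniformly distributed on $[0,1]$, that is $F_{\pm} \sharp \big( f^* \sharp P\mu^{\pm} \big) = \class{U}([0,1])$ for \emph{both} signs. Pushing this common uniform law through $Q$ yields the same measure $\lambda \eqdef Q \sharp \class{U}([0,1])$ in both cases, so $(Q \circ F_{\pm}) \sharp \big( f^* \sharp P\mu^{\pm} \big) = \lambda$.

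Combining the last two displays gives $f_Q \sharp \mu^+ = m\lambda = f_Q \sharp \mu^-$, and Lemma~\ref{lem:signed} concludes. The only delicate point is the application of the probability integral transform: this is precisely where Assumption~\ref{ass:cont_signed} is needed, since non-atomicity makes $F_{\pm}$ continuous and ensures it maps the law to the \emph{exact} uniform rather than a sub-uniform distribution. A second ingredient that is easy to overlook is the equality of the total masses of $\mu^+$ and $\mu^-$ (noted below the Jordan decomposition), which is exactly what lets us upgrade the equality of the \emph{normalised} pushforwards to the equality of the unnormalised ones.
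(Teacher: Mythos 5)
Your proof is correct and follows essentially the same route as the paper: both arguments come down to showing that $Q\circ F_+\circ f^*$ and $Q\circ F_-\circ f^*$ push $\mu^+$ and $\mu^-$ forward to one and the same measure $m\cdot\bigl(Q\sharp\,\mathcal{U}([0,1])\bigr)$, the paper doing the bookkeeping at the level of distribution functions with generalized inverses while you phrase it as the probability integral transform. Your explicit appeal to Lemma~\ref{lem:signed} and to the equality of the total masses of $\mu^+$ and $\mu^-$ is precisely the content the paper re-derives inline when it writes $\Delta_f(t)=\int_{f\le t}d\mu^+-\int_{f\le t}d\mu^-$ and concludes with $F_+\circ F_+^{-1}\circ Q^{-1}(t)-F_-\circ F_-^{-1}\circ Q^{-1}(t)=0$.
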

The proof of Proposition~\ref{prop:DP_good} is postponed to Section~\ref{sec:proofs}. The result becomes rather intuitive following the interpretation of $Q$ as a quantile function of some continuous univariate probability measure $\lambda$ and of $Q \circ F_\square \circ f^*(\cdot)$ as the optimal transport map from $f^* \sharp P\mu^\square$ to $\lambda$ in combination with Lemma~\ref{lem:signed}.

We have a large class of prediction rules which avoid Disparate Treatment and achieves Demographic Parity. Can we find a subset of this class such that its elements also satisfy Equality of Group-Wise Risks? The next proposition explicitly gives a continuous non-decreasing function $Q^*$ such that the resulting prediction rule $f_{Q^*}$ satisfies the Equality of Group-Wise Risks constraint.
\begin{proposition}
\label{prop:EGWR_plus}
Let Assumption~\ref{ass:cont_signed} hold. For the choice $Q^* =(F_+^{-1} + F_-^{-1})/2$, the prediction rule $f_{Q^*}$ is fair in the sense of Equality of Group-Wise Risks.
\end{proposition}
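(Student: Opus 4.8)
The plan is to compute the difference of the two group-wise risks explicitly and to show that the choice $Q = Q^*$ forces it to vanish. Write $A_\pm = \supp(\mu^\pm)$ and $A_0 = \bbR^p \setminus (A_+ \cup A_-)$, and for $s \in \{1,2\}$ set $R_s = \Exp[(f^*(\bsX) - f_Q(\bsX))^2 \mid S = s] = \int (f^*(\bsx) - f_Q(\bsx))^2 \, d\mu_{\bsX | s}(\bsx)$. On $A_0$ the rule gives $f_Q = f^*$, so that region contributes nothing to either risk and only the integrals over $A_+$ and $A_-$ survive. First I would take the difference $R_1 - R_2$ and integrate the (common) integrand against the signed measure $\mu = \mu_{\bsX | 1} - \mu_{\bsX | 2}$; by the Jordan decomposition $\mu = \mu^+ - \mu^-$ with $\mu^\pm$ supported on $A_\pm$, so on $A_+$ one has $d\mu = d\mu^+$ and on $A_-$ one has $d\mu = -\,d\mu^-$.

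Next, denoting by $m = \mu^+(\bbR^p) = \mu^-(\bbR^p)$ the common total mass and recalling $P\mu^\pm = \mu^\pm / m$, this yields
\begin{align*}
\frac{R_1 - R_2}{m} = \int \big(f^* - Q\circ F_+\circ f^*\big)^2 \, dP\mu^+ - \int \big(f^* - Q\circ F_-\circ f^*\big)^2 \, dP\mu^-\enspace.
\end{align*}
I would then push each integral forward through $f^*$, turning them into one-dimensional integrals against $\nu_\pm = f^* \sharp P\mu^\pm$, whose cumulative distribution functions are exactly $F_\pm$. Since Assumption~\ref{ass:cont_signed} guarantees that $\nu_\pm$ are non-atomic, the quantile transform applies: if $T \sim \nu_\pm$ then $F_\pm(T)$ is uniform on $[0,1]$ and $T = F_\pm^{-1}(F_\pm(T))$ almost surely. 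Applying this change of variables gives
\begin{align*}
\frac{R_1 - R_2}{m} = \int_0^1 \big(F_+^{-1}(u) - Q(u)\big)^2 \, du - \int_0^1 \big(F_-^{-1}(u) - Q(u)\big)^2 \, du\enspace.
\end{align*}

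Finally, I would expand both squares; the $Q^2$ terms cancel, and the condition $R_1 = R_2$ reduces, after the difference-of-squares factorization, to $\int_0^1 (F_+^{-1} - F_-^{-1})(F_+^{-1} + F_-^{-1} - 2Q) \, du = 0$. Choosing $Q = Q^* = (F_+^{-1} + F_-^{-1})/2$ makes the second factor vanish identically, so the integrand is zero pointwise and the risk difference is exactly zero, which is precisely EGWR. It then remains to check that $Q^*$ is an admissible index for the family: it is non-decreasing as an average of the non-decreasing quantile functions $F_\pm^{-1}$, and continuous under the assumption, so that $f_{Q^*}$ is well-defined and, by Proposition~\ref{prop:DP_good}, also achieves Demographic Parity.

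I expect the main obstacle to be not the final algebra, which is elementary, but the careful bookkeeping of the signs arising from the Jordan decomposition on $A_+$ versus $A_-$, together with a rigorous justification of the pushforward and quantile change of variables. This is exactly where the non-atomicity of $f^* \sharp \mu^\pm$ from Assumption~\ref{ass:cont_signed} is essential, since it guarantees both the uniformity of $F_\pm(T)$ and the almost-sure inversion identity $T = F_\pm^{-1}(F_\pm(T))$ used to pass to integrals over $[0,1]$.
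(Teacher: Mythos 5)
Your proof is correct, and it reaches the paper's conclusion by a genuinely more elementary route. The paper performs the same initial reduction---the difference of group-wise risks collapses to $\int(f^*-f)^2\,d\mu^+ - \int(f^*-f)^2\,d\mu^-$---but then identifies each term as a squared Wasserstein distance $\mathsf{W}_2^2(f^*\sharp\mu^{\square}, f\sharp\mu^{\square})$ via the optimality of monotone maps in dimension one, and concludes by recognizing $f_{Q^*}\sharp\mu^{\pm}$ as the Wasserstein barycenter of $f^*\sharp\mu^+$ and $f^*\sharp\mu^-$, which is equidistant from both. Your quantile-transform computation unpacks exactly this machinery by hand: the $L^2([0,1])$ distance between quantile functions \emph{is} the one-dimensional $\mathsf{W}_2$ distance, and your identity
\begin{align*}
\frac{R_1-R_2}{m}=\int_0^1\bigl(F_+^{-1}-F_-^{-1}\bigr)\bigl(F_+^{-1}+F_-^{-1}-2Q\bigr)\,du
\end{align*}
is the barycenter's equidistance property made explicit. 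Your version buys two things: it is self-contained (only the probability integral transform for non-atomic laws is needed, no optimal-transport citations), and it actually characterizes \emph{all} admissible $Q$ achieving EGWR as those annihilating that integral, with $Q^*$ the canonical pointwise solution---strictly more information than the paper extracts.

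One small caveat, which you share with the paper: non-atomicity of $f^*\sharp\mu^{\pm}$ does not by itself make $F_{\pm}^{-1}$, hence $Q^*$, continuous (that requires $F_{\pm}$ to be strictly increasing on its support, i.e.\ no gaps). This does not affect the EGWR computation, where continuity of $Q$ plays no role, but it matters for your closing appeal to Proposition~\ref{prop:DP_good}, whose proof inverts $Q$ using its continuity.
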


\begin{figure}[!t]
\centering
\includegraphics[width=0.48\textwidth]{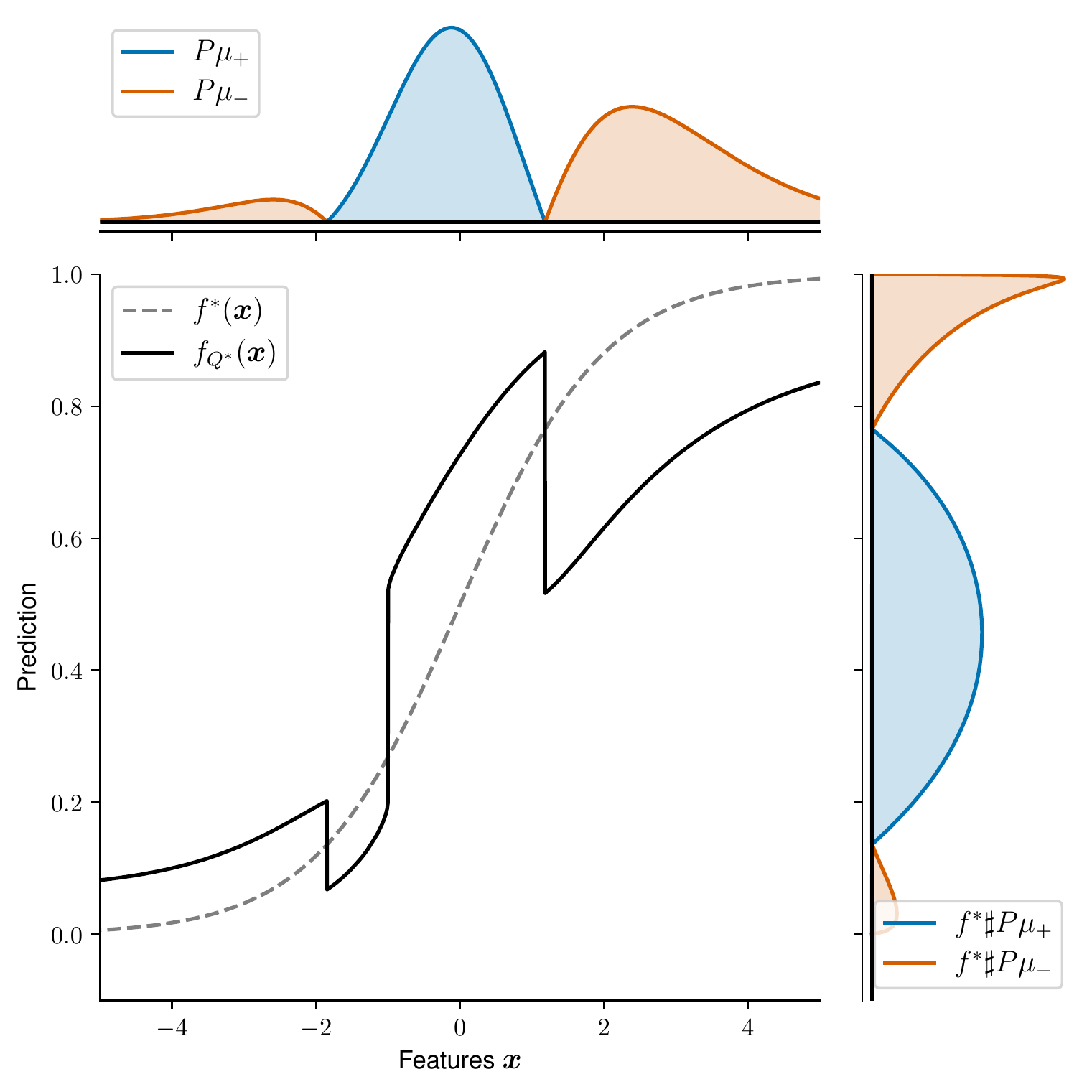}
\includegraphics[width=0.48\textwidth]{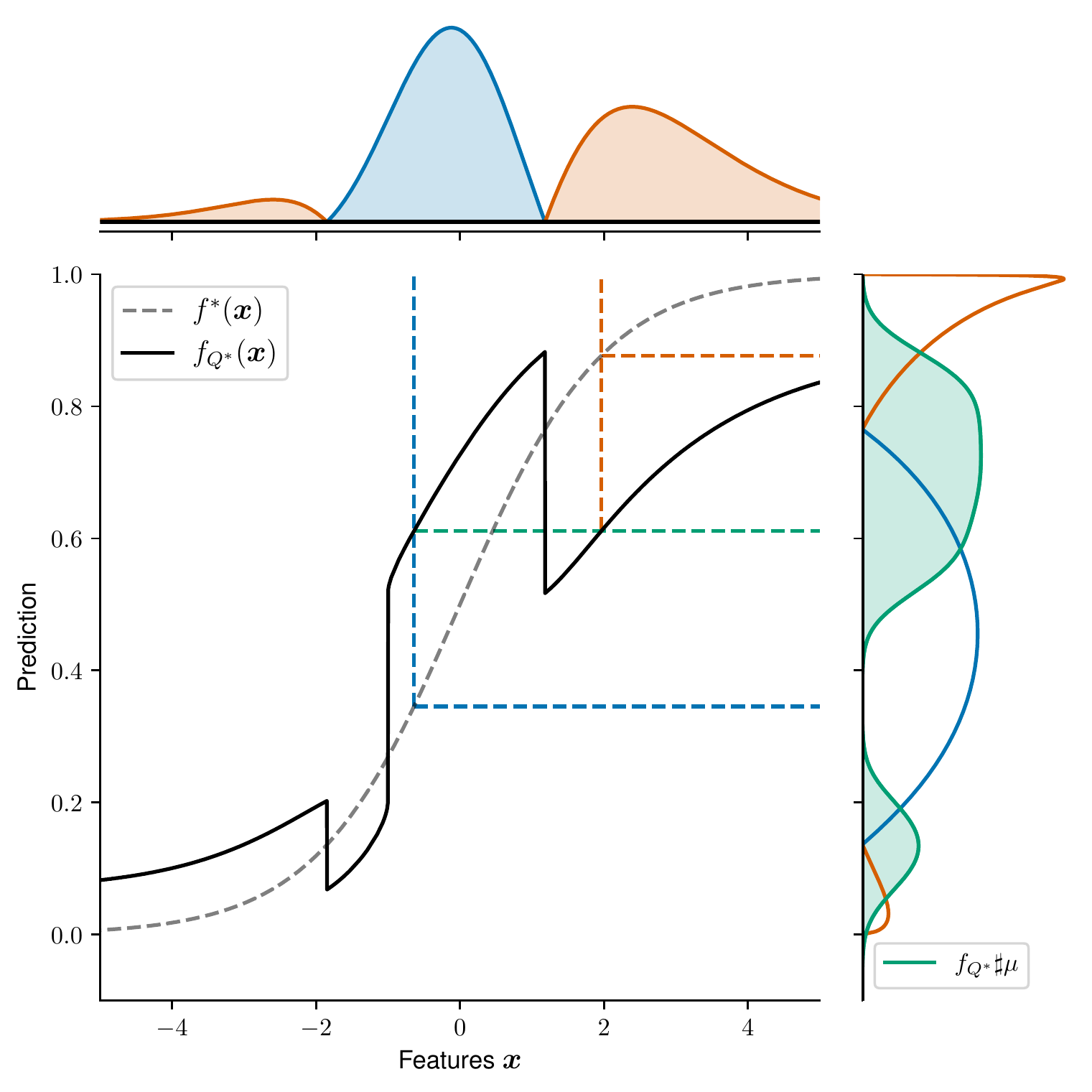}
\caption{(\texttt{Left}) Prediction $f_{Q^*}$ which achieves DP and EGWR. (\texttt{Right}) Concrete examples of predictions.}
\label{fig:moving2}
\end{figure}

\section{Discussion and open questions}
\label{sec:discussion}

% Comment Figure 3
In the previous section we have proved that the prediction rules defined in \eqref{eq:predictor} achieve Demographic Parity and that for a specific choice of continuous non-decreasing function $Q^*$, the prediction rule $f_{Q^*}$ also satisfies the Equality of Group-Wise Risks. The latter prediction rule is represented in Figure~\ref{fig:moving2} for a particular problem: the features are assumed to be group-wise Gaussian random variables with different means and variances. We set the Bayes optimal predictor as $f^*(\bsx) = 1/(1+e^{a\bsx})$ for some positive real $a>0$.

In both plots the dashed grey curve corresponds to the Bayes prediction rule $f^*(\bsx) = \mathbb{E}[Y \mid \bsX=\bsx]$ while the black solid curve represents the prediction rule $f_{Q^*}$ defined in Proposition~\ref{prop:EGWR_plus}. On top of the plots are the densities from the (normalized) Jordan decomposition of $\mu_{\bsX|1} - \mu_{\bsX|2}$ (see also Figure~\ref{fig:hahn}) while on the right side are the densities corresponding to the predictions.

In the right plot, the green horizontal dashed line corresponds to the prediction by $f_{Q^*}$ for the points whose axis correspond to the vertical blue and orange dashed lines. The horizontal blue and orange dashed lines correspond to the prediction by $f^*$. We notice that the prediction curve corresponding to $f_{Q^*}$ looks like a piece-wise translation of the Bayes decision rule in which the predicted value is increased for features which seem to come from the group corresponding to $S=1$ and lowered for the other features.

% PB with decision rule
The prediction rule $f_{Q^*}$ could be formally considered as a good fair predictor since it simultaneously satisfies several formal group-fairness constraints and avoids Disparate Treatment. However, Demographic Parity and EGWR only define fairness on the group level and inspecting the individual level reveals a critical flow of this prediction rule. 
We have constrained our predictors to those that do not produce Disparate Treatment by prohibiting them from having the sensitive variable as direct input. Nevertheless, enforcing group level fairness constraints (such as DP and EGWR) forces the prediction rule to guess the sensitive attribute corresponding to a given feature vector $\bsx$. The idea of our prediction rules is simple: if a feature vector $\bsx$ is more likely to belong to some group then it is treated as a member of this group.
A critical resulting issue of this is that an individual from the minority (\ie the group which gets discriminated) which "looks like" an individual from the majority will be treated as the latter and thus might potentially receive a negative discrimination, worsening their position in the population and in the society. This is clearly contrary to what one would expect from a fair decision-making system and should therefore be avoided.
We remark that a simple remedy from the above flaw is to allow to construct a separate prediction rule for each sensitive group -- wave away the Disparate Treatment requirement. Indeed, making separate predictions for separate groups erases the effect of group guessing and allows to make a more informed decision~\citep{gouic2020price,chzhen2020fair,lipton2018does}.

% Open questions : optimality, what else ?
An interesting open question concerns the optimality of the derived prediction rules: is it possible to find a prediction rule which avoids Disparate Treatment while achieving Demographic Parity and which has smaller squared risk than those of the prediction rules in \eqref{eq:predictor} ? An answer to this question would yield an important step towards understanding the limits of predictions under fairness constraint without having access to the sensitive attribute.
Establishing the optimality would also allow to address relaxed notions of fairness in this context and provide a statistical study similar to~\citet{chzhen2020minimax}.

\section{Conclusion}
\label{sec:conclusion}
In this work we proposed a large family of prediction rules which simultaneously avoid Disparate Treatment and achieve Demographic Parity. In addition, we also showed that a particular member of the proposed family equalizes the group-wise risks.
However, despite these fruitful formal fairness properties, none of the above predictions are able to comply with the intuitive understanding of fairness.
We attribute this effect to the avoidance of Disparate Treatment.
An interesting mathematical challenge which remains unsolved is connected with the risk optimality of the proposed prediction rules.

\section{Omitted proofs}
\label{sec:proofs}
We recall that the \emph{Wasserstein-2} distance between probability distributions $\mu$ and $\nu$ in $\mathcal{P}_2(\mathbb{R}^d)$, the space of measures on $\mathbb{R}^d$ with finite second moment, is defined as
\begin{align}\label{eq:Wasserstein2}
    \mathsf{W}_2^2(\mu, \nu) \coloneqq \inf_{\gamma \in \Gamma(\mu, \nu)} \left\{ \int_{\mathbb{R}^d \times \mathbb{R}^d} \lVert \bsx - \bsy \rVert_2^2 d\gamma(\bsx, \bsy)\right\}\enspace,
\end{align}
where $\Gamma(\mu, \nu)$ denotes the collection of measures on $\mathbb{R}^d \times \mathbb{R}^d$ with marginals $\mu$ and $\nu$.
See \cite{santambrogio2015optimal,villani2003topics} for more details about Wasserstein distances and optimal transport.

\begin{proof}[Proof of Proposition~\ref{prop:DP_good}]
In order to prove that $f_Q$ satisfies the Demographic Parity constraint we examine the following quantity
\begin{align*}
   \Delta_{f_Q}(t) = \mu_{\bsX | 1}\enscond{\bsx \in \bbR^p}{f_Q(\bsx) \leq t} - \mu_{\bsX | 2}\enscond{\bsx \in \bbR^p}{f_Q(\bsx) \leq t}\enspace,
\end{align*}
for any $t \in \bbR$. Fix some $t \in \bbR$.
Let us fix some $Q : [0, 1] \to \bbR$ continuous non-decreasing function. For simplicity we drop the subscript $Q$ from $f_Q$ and write $f$ instead.
We can write by the definition of $f$, $\mu$ and $\mu^+, \mu^-$ that
\begin{align*}
    \Delta_f(t)
    &=
    \int_{{f(\bsx) \leq t}} d\mu(\bsx)
    =
    \int_{{f(\bsx) \leq t}} d\mu^+(\bsx) - \int_{{f(\bsx) \leq t}} d\mu^-(\bsx)\\
    &=
    \int_{{Q\parent{\mu^+\enscond{\bsx' \in \bbR^p}{f^*(\bsx') \leq f^*(\bsx)}} \leq t}} d\mu^+(\bsx) - \int_{{Q\parent{\mu^-\enscond{\bsx' \in \bbR^p}{f^*(\bsx') \leq f^*(\bsx)}} \leq t}} d\mu^-(\bsx)\enspace.
\end{align*}
Let $Q^{-1}$ be the generalized inverse of $Q$. Hence, since $Q$ is assumed to be continuous we can write
\begin{align*}
    \Delta_f(t) =  \int_{{\mu^+\enscond{\bsx' \in \bbR^p}{f^*(\bsx') \leq f^*(\bsx)} \leq Q^{-1}(t)}} d\mu^+(\bsx) - \int_{{\mu^-\enscond{\bsx' \in \bbR^p}{f^*(\bsx') \leq f^*(\bsx)} \leq Q^{-1}(t)}} d\mu^-(\bsx)\enspace.
\end{align*}
Introduce $F_{\square}(\cdot) = \mu^{\square}\enscond{\bsx' \in \bbR^p}{f^*(\bsx') \leq \cdot}$ for $\square \in \{\pm\}$ and note that thanks to Assumption~\ref{ass:cont_signed} both $F_{+}$ and $F_{-}$ are non-decreasing continuous.
Thus,
\begin{align*}
    \Delta_f(t)
    &=
    \int_{F_+(f^*(\bsx)) \leq Q^{-1}(t)} d\mu^+(\bsx) - \int_{F_-(f^*(\bsx)) \leq Q^{-1}(t)} d\mu^-(\bsx)\\
    &=
    \int_{f^*(\bsx) \leq F_{+}^{-1} \circ Q^{-1}(t)} d\mu^+(\bsx) - \int_{f^*(\bsx) \leq F^{-1}_- \circ Q^{-1}(t)} d\mu^-(\bsx)\\
    &=
    F_+ \circ  F_{+}^{-1} \circ Q^{-1}(t) - F_- \circ F^{-1}_- \circ Q^{-1}(t) = 0\enspace.
\end{align*}
The proof is concluded since $\sup_{t \in \bbR} |\Delta_f(t)| = 0$ implies that $f$ satisfies the Demographic Parity constraint.
\end{proof}

\begin{proof}[Proof of Proposition~\ref{prop:EGWR_plus}]
In this proof we consider the prediction rule $f_{Q^*}$ defined in \eqref{eq:predictor} with the specific choice $Q^*\coloneqq(F_+^{-1} + F_-^{-1})/2$. Let $p_1 = \Prob(S=1)$ and $p_2= \Prob(S=2)=1-p_1$.
Since $Q^*$ is fixed in throughout this proof, we drop the subscript $Q^*$ and write $f$ instead of $f_{Q^*}$ for compactness.

Recall that we defined the signed measure $\mu = \mu_{\bsX|1} - \mu_{\bsX|2}$. Using its Hahn decomposition, $\mu = \mu^+ - \mu^-$, we can write $\mu_{\bsX|1} = \mu^+ - \mu^- + \mu_{\bsX|2}$ and express the risk of the predictor $f$ as
   \begin{align}
   \label{eq:risk_decomposition}
       \risk(f)
       &=
       p_1\int(f^*(\bsx) - f(\bsx))^2d\mu_{\bsX | 1}(\bsx) + p_2\int(f^*(\bsx) - f(\bsx))^2d\mu_{\bsX | 2}(\bsx) \nonumber \\ 
       &= \int(f^*(\bsx) - f(\bsx))^2d\mu_{\bsX | 2}(\bsx) \\ &\qquad+ p_1\parent{\int(f^*(\bsx) - f(\bsx))^2d\mu^+(\bsx) -  \int(f^*(\bsx) - f(\bsx))^2d\mu^-(\bsx)}\enspace. \nonumber
   \end{align}
   Since $f\sharp\mu^\square = T_\square\sharp(f^*\sharp\mu^\square)$ for $\square \in \{\pm\}$, where $T_\square = Q \circ F_\square$ is a monotone non-decreasing function, \cite[Theorem 2.9]{santambrogio2015optimal} implies
\begin{align*}
   \int(f^*(\bsx) - f(\bsx))^2d\mu^\square(\bsx) = \mathsf{W}_2^2(f^*\sharp\mu^\square, f\sharp\mu^\square), \text{ for } \square \in \{\pm\} \enspace.
\end{align*}
Following \cite[Section 6.1]{agueh2011barycenters}, the solution to the Wasserstein-2 barycenter problem
\begin{align*}
    \min_{\nu} \left(\frac{1}{2}\mathsf{W}_2^2(\nu, f^* \sharp \mu^+) + \frac{1}{2}\mathsf{W}_2^2 (\nu, f^* \sharp\mu^-)\right)
\end{align*}
is given by the measure
\begin{align}
\label{eq:1}
    \bar{\nu} &= \frac{1}{2}\left(F_+^{-1} +  F_-^{-1}\right) \circ F_+ \circ f^* \sharp \mu^+ \\
    \label{eq:2}
    &= \frac{1}{2}\left(F_+^{-1} +  F_-^{-1}\right) \circ F_- \circ f^* \sharp \mu^-\enspace.
\end{align}
Indeed, observe that $\frac{1}{2}\left(F_+^{-1} +  F_-^{-1}\right) \circ F_+$ is the optimal transportation plan from $f^* \sharp \mu^+$ to the barycenter of $f^* \sharp \mu^+, f^* \sharp \mu^-$.
Since Eq.~\eqref{eq:1} corresponds to $f\sharp \mu^+$ on $\supp(\mu^+)$ and Eq.~\eqref{eq:2} to  $f\sharp \mu^-$ on $\supp(\mu^-)$, the distances to the barycenter being equal, we have
\begin{align}
\label{eq:eq_dist_barycenter}
    \mathsf{W}_2^2(f^*\sharp\mu^+, f\sharp\mu^+) =     \mathsf{W}_2^2(f^*\sharp\mu^-, f\sharp\mu^-)\enspace.
\end{align}
Plugging \eqref{eq:eq_dist_barycenter} in \eqref{eq:risk_decomposition} yields
\begin{align*}
   \risk(f) = \int(f^*(\bsx) - f(\bsx))^2d\mu_{\bsX | 2}(\bsx)\enspace,
\end{align*}
and concludes the proof.
\end{proof}

\bibliographystyle{apalike}
\bibliography{biblio.bib}

\end{document}